\newcommand{\sys}{HummingBird\xspace}
\newcommand{\Mod}[1]{\ (\mathrm{mod}\ #1)}
\newcommand{\arith}[2]{\langle#1\rangle^{#2}}
\newcommand{\bin}[1]{\langle#1\rangle^{B}}
\DeclareMathOperator*{\drelu}{DReLU}
\DeclareMathOperator*{\relu}{ReLU}
\DeclareMathOperator*{\compress}{Compress}
\theoremstyle{plain}
\newtheorem{theorem}{Theorem}
\theoremstyle{definition}
\theoremstyle{remark}
\author{
\normalsize 
Kiwan Maeng$^\dagger$, 
G. Edward Suh$^\ddagger{^\S}$, 
\\
\normalsize
$^\dagger$Pennsylvania State University,
$^\ddagger$Meta AI,
$^\S$Cornell University,
}
\begin{document}

\title{
Approximating ReLU on a Reduced Ring for Efficient MPC-based Private Inference
}

\date{}
\maketitle

\thispagestyle{empty}

\begin{abstract}
Secure multi-party computation (MPC) allows users to offload machine learning inference on untrusted servers without having to share their privacy-sensitive data.
Despite their strong security properties, MPC-based private inference has not been widely adopted in the real world due to their high communication overhead. When evaluating ReLU layers, MPC protocols incur a significant amount of communication between the parties, making the end-to-end execution time multiple orders slower than its non-private counterpart.

This paper presents \sys, an MPC framework that reduces the ReLU communication overhead significantly by using only a subset of the bits to evaluate ReLU on a smaller ring. Based on theoretical analyses, \sys identifies bits in the secret share that are not crucial for accuracy and excludes them during ReLU evaluation to reduce communication.
With its efficient search engine, \sys discards 87--91\% of the bits during ReLU and still maintains high accuracy. On a real MPC setup involving multiple servers, \sys achieves on average 2.03--2.67$\times$ end-to-end speedup without introducing any errors, and up to 8.64$\times$ average speedup when some amount of accuracy degradation can be tolerated, due to its up to 8.76$\times$ communication reduction.
\end{abstract}

\section{Introduction}

Machine learning (ML) inference often uses privacy-sensitive user data as an input feature.
A model that predicts patients' disease by looking at their X-ray images~\cite{xray} uses the patients' private X-ray data.
Code auto-completion services like GitHub CoPilot~\cite{copilot} take in the user's proprietary code snippet to fill in the rest of the code.
%
%
Smart home devices that take in the user's verbal command~\cite{alexa, googlehome, fbportal} collect the user's raw microphone inputs that can contain  sensitive information.
As ML models powering these services become larger and are often proprietary,
an increasing trend is to host these models on a remote server owned by the service provider, to which the users send their input data.
%
This emerging trend creates a dilemma for the users --- to use high-quality services empowered by large ML models, the users have to send their privacy-sensitive input data to a third party, risking potential privacy leakage.

Secure \emph{multi-party computation} (MPC;~\cite{mpc}) is gaining wide interest as a potential solution to this dilemma. MPC allows users to offload ML inference to third-party servers, without having to reveal their private data to the servers~\cite{minionn, delphi, gazelle, aby, aby3, crypten, cryptflow, cryptflow2, cheetah, deepreduce, snl}. In MPC, instead of sending their raw data, users send \emph{secret shares} of their data, from which the servers cannot infer the users' raw data. Without learning anything about the users' data, the servers run inference using the secret shares and send the result back to the users. Only the users, once they receive all the results from the servers, can retrieve the output of the inference. Figure~\ref{fig:mpc_intro} summarizes the high-level operation of an MPC-based private inference.

\begin{figure}
    \centering
    \includegraphics[width=0.49\textwidth]{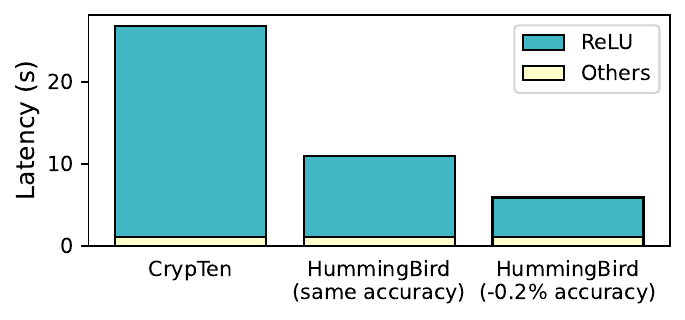}
    \caption{Latency of running \emph{512} CIFAR10 inferences on ResNet18 with CrypTen~\cite{crypten} and our proposed framework, \sys. When 0.2\% accuracy degradation is tolerated, \sys achieves a throughput of 87 samples/s (4.41$\times$ over CrypTen). Details of the setup can be found in Section~\ref{sec:eval}.}
    \label{fig:teaser}
\end{figure}

Despite their strong security guarantees, MPC-based private inference has not been widely adopted in the real world yet, due to their high runtime overheads. Even the most efficient MPC schemes~\cite{crypten, cheetah} experience multiple orders of magnitude slowdown over a non-private baseline.
%
Unlike non-private inference that are usually computation- or memory-bound, the majority of the overhead in MPC comes from communications between parties during non-linear operations --- or most prominently, ReLU. In a particular setup we studied, ReLU was accountable for over 93\% of the total overhead (Figure~\ref{fig:teaser}, leftmost bar), which is in line with observations from prior works~\cite{deepreduce}.
To tackle this unique source of overhead, recent works concentrated on designing a faster algorithm for ReLU~\cite{aby, aby3, cryptflow2, securenn, falcon} or model architectures that use less number of ReLUs~\cite{deepreduce, sphynx, snl, deepreshape, senet}.

In this paper, we explore an orthogonal approach that accelerates \emph{existing} ReLU algorithms further by approximating the sign estimation process (\emph{i.e.}, \emph{DReLU}).
%
%
The key insight is that simply \emph{guessing the sign}, unlike high-precision arithmetic operations, can still be done correctly by only looking at a small subset of bits on a smaller ring.
We theoretically show that for a large family of ReLU protocols, discarding a carefully-selected amount of high-order and low-order bits of a secret share renders the final ReLU outcome equivalent to magnitude-based activation pruning, which is empirically known to have little effect on accuracy~\cite{fatrelu, activation_sparsity, activation_sparsity2, attention_sparsity, attention_sparsity2} if done properly (Section~\ref{sec:proof}).

Based on the theoretical insight, we propose \sys, a framework that automatically selects a proper number of bits to discard for each ReLU layer and uses an optimized kernel to translate the reduced bits into an end-to-end speedup. \sys achieves 2.49--5.34$\times$ end-to-end speedup on a typical LAN setup (Figure~\ref{fig:teaser}), and up to 8.64$\times$ speedup on a network-constrained WAN setup over the popular CrypTen framework~\cite{crypten}.
\sys is orthogonal to works that reduce the number of ReLUs~\cite{deepreduce, sphynx, snl, deepreshape, senet} and can be used in synergy to further accelerate them.
Below summarizes our contributions:

%

\begin{enumerate}
    \item We theoretically show that only using a small subset of the bits of the secret shares is sufficient to keep the ReLU results close to the original result for a large family of MPC protocols. Specifically, we show that removing the majority of the high-order and low-order bits in the secret shares renders the result identical to activation pruning. The theoretical result serves as a stepping stone for \sys. 
    \item We propose an efficient search algorithm to decide how many high- and low-order bits to remove for each layer, and present an efficient search engine that performs the search on a lightweight simulation environment. Within a reasonable amount of time (several minutes to an hour), \sys finds a configuration that minimally impacts the model accuracy while significantly improving the communication overhead.
    \item We implemented a runtime library as an extension to CrypTen~\cite{crypten} that can bring up to 8.64$\times$ average end-to-end speedup and 8.76$\times$ communication reduction with the configuration found by the search algorithm. We will open-source the entire codebase, including the search engine and the runtime library, upon paper publication.
\end{enumerate}
\section{Background and Motivation}

\subsection{Private Inference with Multi-party Computation}
\label{sec:bg_mpc}

With the rising concerns on data privacy in ML-based services, MPC-based private inference is gaining wide interest.
%
Existing works on MPC-based inference can be broadly classified into either a \emph{client-server} setup or a \emph{multi-server} setup.

Client-server MPC~\cite{minionn, gazelle, secureml, delphi, cheetah, cryptflow2, ezpc} studies a setup where an MPC-based inference runs between a client holding data and a server holding a model.
In this setup, the server runs most of the heavy computations, assuming that the client device is not powerful (\emph{e.g.}, smartphone or personal laptop)~\cite{reagen_asplos}.
This setup provides strong security where the client does not need to worry about collusion. However, protocols targeting this setup are generally slower because they use a mixture of MPC and homomorphic encryption (HE). These protocols are often called 2PC~\cite{cryptflow2} or hybrid~\cite{aespa} protocols as well.

%
%
%
%

Multi-server MPC~\cite{cryptgpu, aby3, cryptflow, falcon, securenn, charmeleon, astra, blaze, flash, trident, crypten} studies a setup where multiple non-colluding servers collaboratively run an MPC-based inference.
Unlike client-server MPC where one of the parties (the server) does most of the computation, workloads are more balanced in this setup.
While users can also act as one of the parties if they have enough computing power, it is more common to assume they do not participate. Instead, users simply offload the inference to multiple non-colluding servers~\cite{secureml} by generating and sending secret shares of their inputs (Figure~\ref{fig:mpc_intro}).
The servers performing MPC cannot learn about the users' input from the received secret shares unless they collude.
In this setup, the model can be both shared between the parties or be private to one of the parties. If the model is private, participating parties except for the owner of the model use an encrypted model, and the execution is slower compared to when the model is shared.

\begin{figure}
    \centering
    \includegraphics[width=0.49\textwidth]{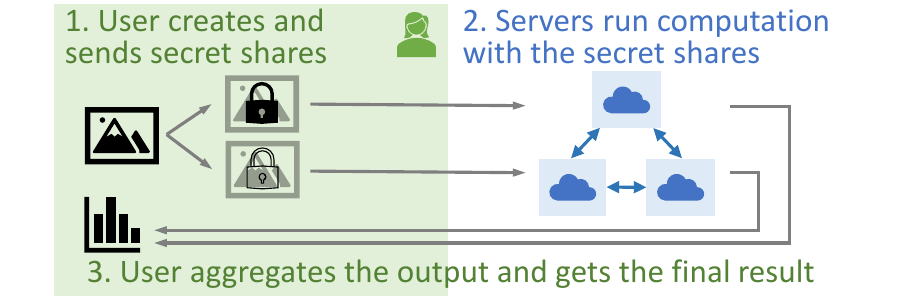}
    \caption{Overview of a multi-server MPC protocol.}
    \label{fig:mpc_intro}
\end{figure}

%
%
%

Multi-server MPC is usually faster than the client-server MPC because it does not involve expensive HE operations --- a recent study~\cite{cheetah} observed a 15$\times$ difference between the two due to the HE operations.
The major downside is that the user data are safe only when the involving parties do not collude~\cite{crypten}.
This non-colluding assumption can be realized with policies and contracts between the parties.
Many companies are forming an alliance~\cite{mpc_alliance} to explore and adopt MPC technologies, and some simple form of MPC is already being adopted in the industry~\cite{meta_mpc}.

\paragraph{Evaluation of ReLU}
For all the MPC protocols, evaluating ReLU consists of a significant portion of the overhead. ReLU is evaluated in several different ways: some of the popular approaches include the Goldreich-Micali-Wigderson (GMW) protocol~\cite{gmw, crypten, aby3, cryptgpu, blaze}, garbled circuit~\cite{yao, aby3, delphi, gazelle, minionn, secureml}, or a variant of SecureNN~\cite{securenn}'s protocol~\cite{securenn, falcon, cryptflow}. Among these, the GMW protocol is GPU-friendly~\cite{cryptgpu} and is often used in GPU-based high-throughput systems~\cite{crypten, cryptgpu}.
%
%

Many of the aforementioned protocols~\cite{gmw, crypten, aby3, cryptgpu, blaze, securenn, falcon, cryptflow} evaluate ReLU by first evaluating whether the secret is positive, \emph{i.e.}, $x \ge 0?$, and multiplying the boolean result by the original secret. Following prior works~\cite{cryptflow}, we call this sign estimation operator \emph{DReLU}~\footnote{for derivative of ReLU}: $\drelu(x)=1$ iff $x \ge 0$ and 0 otherwise.
With DReLU, ReLU is trivially:
\begin{equation}
\label{eq:relu_base}
    \relu(x) = x\times\drelu(x).
\end{equation}
Accelerating the DReLU operation can directly accelerate ReLU for these protocols~\cite{gmw, crypten, aby3, cryptgpu, blaze, securenn, falcon, cryptflow}.

\paragraph{Scope of \sys}
We describe and evaluate the idea of \sys on top of CrypTen~\cite{crypten}, a GMW-based multi-server MPC framework developed and maintained by Meta. CrypTen is popular due to its high-speed GPU support~\cite{crypten} and has served as a foundation of several recent works~\cite{cryptgpu, mpcformer, mpc_transformer}.

While this paper is written around CrypTen, its idea is relevant to a wider range of works --- it is directly applicable to any other protocol that uses Equation~\ref{eq:relu_base} for ReLU and experiences a DReLU overhead that increases with the ring size (\emph{i.e.}, the number of bits in the secret share). All the other GMW-based systems~\cite{aby3, cryptgpu, blaze} and other popular systems~\cite{securenn, falcon, cryptflow, cryptflow2} fall into this category.
%
%
As in the original CrypTen paper, we assume an honest-but-curious adversary~\cite{crypten}.

\subsection{Operation of CrypTen and GMW Protocol}
\label{sec:bg_gmw}

\paragraph{Notations}

Let $x \in \mathbb{Z}/Q\mathbb{Z}$ be a secret value in an integer ring of size $Q=2^N$. We denote $p$ arithmetic secret shares of $x$ as $\arith{x}{Q}_{p} \in \mathbb{Z}/Q\mathbb{Z}$, where $\Sigma_{i=0}^{p-1}\arith{x}{Q}_i \equiv x \Mod{Q}$.
We simply denote the set of the shares as $\arith{x}{Q} = \{\arith{x}{Q}_p\}$.
For $x$ represented in an $N$-bit signed integer representation (two's complement), we denote $p$ binary secret shares of $x$ as $\bin{x}_p$, where $\oplus_{i=0}^{p-1} \bin{x}_i = x$ for a bitwise XOR operation $\oplus$.
Throughout the paper, we assume an element in a ring of size $2^n$ is always in an $n$-bit signed integer representation for any $n$.
We express bits from the $m$-th bit to the $k-1$-th bit in $x$ ($m \le k$) as $x[k:m]$.
For example, if $x = 11011101_b$, $x[5:1]=1110_b$.
Note that the $k$-th bit is excluded.
We treat the resulting $x[k:m]$ as an element on a smaller ring $\mathbb{Z}/2^{k-m}\mathbb{Z}$ unless stated otherwise.
Similarly, we denote the $k$-th bit of $x$ as $x[k]$.

\paragraph{Operation of CrypTen}

%
%
In CrypTen, users split their secret input $x \in \mathbb{Z}/Q\mathbb{Z}$ into $p$ arithmetic secret shares and send each share $\arith{x}{Q}_{p}$ to different participating servers $P_p$.
CrypTen can work with any number of $p \ge 2$;
when $p=2$, secret shares can be easily generated by the client generating a random number $r$ and making $\arith{x}{Q}_0 = x + r$, $\arith{x}{Q}_1 = -r$.
%
%
Floating-point values $x_f$ are converted to an integer ring element $x$ by multiplying with a large integer $D$ and rounding ($x = \lfloor Dx_f \rceil$).

Addition or multiplication by a public value can be trivially done directly on arithmetic secret shares (\emph{e.g.}, $\Sigma_{i=0}^{p-1}a\arith{x}{Q}_i \equiv ax \Mod{Q}$), allowing efficient linear operations (convolution or fully-connected layers) by a public weight.
Addition between two secret shares can also be done trivially without additional overhead.
Multiplication between secret shares adds more overheads because it requires communications between the parties and a set of random numbers called the Beaver triplets~\cite{beaver}. Beaver triplets can be generated and distributed by a trusted third-party (TTP) or using oblivious transfer~\cite{crypten}.
We defer detailed explanations of these arithmetic operations to prior works~\cite{crypten}, as it is not the focus of our optimization.

\paragraph{Evaluating ReLU with GMW}
Non-linear operations, such as max pooling or ReLU, are much more complicated and expensive in MPC.
Here, we describe in detail how ReLU operation is evaluated with the Goldreich-Micali-Wigderson (GMW) protocol, which accounts for more than 93\% of the total execution time (Figure~\ref{fig:teaser}) and is the focus of our paper.

CrypTen evaluates ReLU by separately evaluating DReLU (Equation~\ref{eq:relu_base}).
When DReLU is applied to a secret share $\arith{x}{Q}$, the output is a secret share of one ($\arith{1}{Q}$) if $x \ge 0$ and $\arith{0}{Q}$ otherwise.
ReLU is evaluated by:
\begin{equation}
\label{eq:relu}
    \relu(\arith{x}{Q}) = \arith{x}{Q}\times\drelu(\arith{x}{Q}).
\end{equation}
This requires a multiplication between secret shares and uses the aforementioned Beaver triplets.

Most of the overheads of ReLU come from estimating $\drelu(\arith{x}{Q})$. Below, we explain how the GMW protocol evaluates DReLU. First, the arithmetic secret shares $\arith{x}{Q}$ are converted into binary secret shares $\bin{x}$.
The arithmetic-to-binary (A2B) conversion is done by each party $P_p$ first generating binary secret shares of their arithmetic secret shares, $\bin{\arith{x}{Q}_p}$, and adding their binary shares $\bin{\arith{x}{Q}}_p$ locally~\cite{crypten}. As only bitwise operations like AND or XOR can be done on the binary shares, the addition of $\bin{\arith{x}{Q}}_p$ is performed using a series of AND and XOR operations, as it would be done by an adder circuit (\emph{e.g.}, carry-lookahead adder)~\cite{crypten}.
After the conversion, the most significant bit (MSB; sign bit) of $\bin{x}$ (which is $\bin{x}[$N$-1]$ if $Q=2^N$) holds the binary secret share of 0 if $x$ is positive and 1 if negative. Converting $\bin{x}[$N$-1]$ back into arithmetic secret shares (binary-to-arithmetic; B2A) and subtracting it from a public value 1 gives us our desired $\drelu(\arith{x}{Q})$~\cite{crypten}.

During the circuit addition, XOR can be done locally on each party, similarly to how addition can be done privately on arithmetic secret shares. However, AND, like multiplication between arithmetic secret shares, requires Beaver triplets and communications between the parties. For an $N$-bit secret $x$, the circuit adder implementation requires $O(logN)$ rounds of communication and $O(N)$ bits communicated at each round, resulting in $O(NlogN)$ total communication overheads. Usually, $N$ is large (\emph{e.g.}, 64~\cite{crypten}) to avoid arithmetic wrap-around errors~\cite{crypten, secureml}, and the communication overhead becomes the major bottleneck of DReLU~\cite{crypten, aby3, cryptgpu}.

\subsection{Detailed Overhead Characterization}
\label{sec:bg_detail}

To study the bottleneck of GMW-based MPC protocols, we measured the major overheads of running CrypTen on two nodes with an A100 GPU, connected with a 10 Gbps LAN. More details of the setup can be found in Section~\ref{sec:eval}. We ran ResNet18~\cite{resnet} with CIFAR10~\cite{cifar10} dataset with a batch size of 512. We replaced the max pooling layer with average pooling as in prior works~\cite{reagen_asplos, cryptonite} to concentrate on the ReLU overhead.
While CrypTen (and our proposed optimizations) can be applied both to unencrypted and encrypted models~\cite{crypten}, we assumed that the model is unencrypted and shared among parties, which makes the inference more efficient.
%
%

The leftmost bar in Figure~\ref{fig:teaser} shows the measured overhead breakdown. First, we can see that the numbers are already quite efficient --- finishing an inference of 512 samples in only 26.82 seconds (19.1 samples/s) --- thanks to CrypTen's efficient GPU support. However, the overhead is still significant.
Especially, it can be observed that 93\% of the overhead comes from ReLU layers.
%
As we will show in Section~\ref{sec:eval}, \sys reduces the total communication by 2.68--8.76$\times$, resulting in up to 8.64$\times$ end-to-end speedup (Figure~\ref{fig:teaser}).

\begin{figure}
    \centering
    \includegraphics[width=0.4\textwidth]{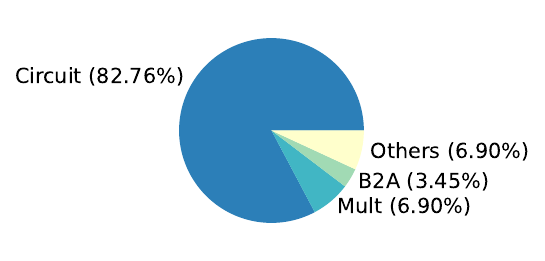}
    \caption{Communication incurred by each part of ReLU.}
    \label{fig:characterize}
\end{figure}

Figure~\ref{fig:characterize} further breaks down the large communication overhead incurred by the ReLU layer into different components.
\textbf{Circuit} refers to the circuit adder explained in Section~\ref{sec:bg_gmw} during the A2B conversion (82.76\%). Specifically, the AND operation inside the circuit adder incurs communication.
\textbf{Mult} refers to the multiplication shown in Equation~\ref{eq:relu} that is done at the end between the secret share and the DReLU output (6.9\%).
\textbf{B2A} refers to the B2A conversion of the 1-bit DReLU output. Unlike the A2B counterpart that performs $N$-bit to $N$-bit conversion, B2A converts only one bit (indicating the sign) and is much cheaper (3.45\%). \textbf{Others} are AND operations happening inside A2B other than what is captured by \textbf{Circuit} (6.9\%).
Evidently, the vast majority of the communication comes from the circuit adder during A2B conversion.

By reducing the number of bits used in DReLU, \sys directly optimizes \textbf{Circuit}, as its communication overhead is $O(NlogN)$ with $N$ bits (Section~\ref{sec:bg_gmw}).
\sys's optimization additionally improves \textbf{Others}, and \sys's efficient bitpacking library (Section~\ref{sec:sys_lib}) also accelerates \textbf{B2A}. \textbf{Mult} cannot be optimized with \sys.


\section{Approximating DReLU with a Subset of Bits}
\label{sec:proof}

The core idea of our optimization is to only use a small fraction of the bits in the secret shares to evaluate the sign of the secret (\emph{i.e.}, DReLU).
Especially, we will show that discarding a certain number of the most- and least-significant bits still allows for correctly estimating the sign.
In other words, for a properly chosen $k$ and $m$ ($k \ge m$), only using $\arith{x}{Q}[k:m]$ to estimate DReLU still gives the correct sign most of the time.
We leverage the fact and propose to use the following approximate equation instead the exact Equation~\ref{eq:relu}:
\begin{equation}
\label{eq:relu_ours}
    \relu(\arith{x}{Q}) \approx \arith{x}{Q}\times\drelu(\arith{x}{Q}[k:m]).
\end{equation}

Figure~\ref{fig:overview} summarizes the proposed approximation, where our unique components are highlighted in blue.
For the GMW protocol, the approximation significantly improves the DReLU complexity from the original $O(NlogN)$ with $N$ bits into $O((k-m)log(k-m))$, where $k - m << N$.
The approximation will also benefit any other protocols whose DReLU overhead decreases with the number of bits~\cite{securenn, falcon, cryptflow, cryptflow2, aby3, cryptgpu, blaze}.

\begin{figure}
    \centering
    \includegraphics[width=0.49\textwidth]{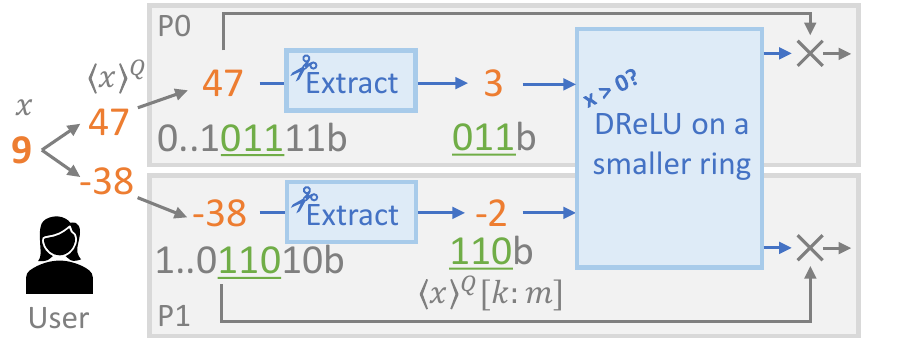}
    \caption{Summary of the proposed approximate ReLU calculation. Our unique contributions are highlighted in blue.}
    \label{fig:overview}
\end{figure}

\subsection{Correctness of the Approximate Algorithm}

In this section, we first explain how the approximate algorithm works in more detail with an example. Then, we theoretically show that the ReLU results stay mostly unchanged if $k$ and $m$ are properly selected; in fact, we will show that the result becomes equivalent to performing a magnitude-based activation pruning after performing exact ReLU.
%

\subsubsection{Example Execution}
We show how the approximate algorithm can still generate a mostly-correct result with an example in Figure~\ref{fig:overview}.
In this example, the user wants to evaluate ReLU on her secret input $x=9$.
The user first generates secret shares $\arith{x}{Q}=\{47,\ -38\}$ and sends each share to different parties, $P_0$ and $P_1$. Note that $47-38=9$ retrieves the original secret value.
Without our optimization, DReLU takes the two secret share values directly as an input and outputs secret shares that indicate the original secret's sign. As the secret ($x=9$) is positive in our example, the output will be $\arith{1}{Q}$. 

In the approximated algorithm, instead of using the shares (47 and -38) directly, each party extracts bits from $k-1$ to $m$ (highlighted in green for $k=5$, $m=2$) and creates new secret shares $\arith{x}{Q}[k:m] = \{3, -2\}$. Note that the bit extraction can be done locally.
The new secret shares can be considered as secret shares of $3-2=1$ in a smaller ring of size $2^3=8$. While the values of the secret shares and the secret value the shares encode all changed significantly (47 $\rightarrow$ 3, -38 $\rightarrow$ -2, 9 $\rightarrow$ 1), note that the sign of the secret value (9 and 1) did not change.
As the secret is still positive, DReLU will still output $\arith{1}{Q}$, and the approximated ReLU result in this example will be \emph{exactly the same} with the precise output.

The reason why the approximation works at a high level is (roughly) because the DReLU result does not change as long as the inequality relationship between the secret shares stays the same. For example, $\arith{x}{Q}= \{47, -38\}$ results in a DReLU output of $\arith{1}{Q}$ because the positive secret share's absolute value (47) is larger than the negative share's (38). This relationship still holds even if we apply, \emph{e.g.}, modulo of 32 (equivalent to dropping high-order bits) or division by 4 (similar to dropping low-order bits) to both shares. In the next section, we provide formal proof of this insight.

\subsubsection{Theoretical Analysis}

In this section, we theoretically prove that the approximate ReLU result is equivalent to magnitude-based activation pruning after performing exact ReLU, with a properly-chosen $k$ and $m$ values.
Our proof is in two steps: we first prove that
(1) removing the $k$-th and higher bits from a secret share does not impact the output of DReLU with a carefully-chosen $k$; then, we prove that (2) removing $m$ low-order bits of a secret share is equivalent to magnitude-based activation pruning.
We only show the proof for a 2-party case (\emph{i.e.}, $p \in \{0, 1\}$) for simplicity; the proof can be extended to more parties trivially.

%
%
%
%

\paragraph{Removing high-order bits}
First, we prove that removing $N-k$ high-order bits of a secret share (\emph{i.e.}, using $\arith{x}{Q}[k:0]$ instead) does not change the DReLU output, if $k$ is selected such that $-2^{k-1} \le x < 2^{k-1}$ holds for all $x$.
The high-level idea of the proof is that $\arith{x}{Q}[k:0]$ can be seen as secret shares of $x[k:0]$ in $\mathbb{Z}/2^{k}\mathbb{Z}$, and hence the DReLU result will be the same if the most significant bit (MSB; sign bit) of $x[k:0]$ is the same as the MSB of $x$.
\begin{theorem}
\label{thm:k}
Consider arithmetic secret shares of $x \in \mathbb{Z}/Q\mathbb{Z}$, $\arith{x}{Q}_p \in \mathbb{Z}/Q\mathbb{Z}$ ($p \in \{0, 1\}$). Assume $\arith{x}{Q}_p$ is represented in an $N$-bit signed integer representation. For $k < N$, $\drelu(\arith{x}{Q}) = \drelu(\arith{x}{Q}[k:0])$ if $-2^{k-1} \le x < 2^{k-1}$.
\end{theorem}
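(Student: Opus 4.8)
The plan is to prove the statement in two stages that mirror the high-level intuition already sketched: first establish that the locally truncated shares $\arith{x}{Q}[k:0]$ are legitimate arithmetic shares of $x \bmod 2^k$ on the smaller ring $\mathbb{Z}/2^k\mathbb{Z}$, and then argue that under the range hypothesis the sign of this reduced secret agrees with the sign of $x$, so that DReLU --- which reads off only the sign bit --- returns the same value in both rings.

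First I would observe that extracting the low $k$ bits of an integer is exactly reduction modulo $2^k$, i.e. $\arith{x}{Q}_p[k:0] \equiv \arith{x}{Q}_p \Mod{2^k}$ for each party $p$. Since $k < N$ implies $2^k \mid 2^N$, the sharing relation $\arith{x}{Q}_0 + \arith{x}{Q}_1 \equiv x \Mod{Q}$ descends to $\arith{x}{Q}_0 + \arith{x}{Q}_1 \equiv x \Mod{2^k}$. Because bit extraction commutes with the modular sum of the shares, the secret reconstructed from the truncated shares on the smaller ring is
\begin{equation*}
\arith{x}{Q}_0[k:0] + \arith{x}{Q}_1[k:0] \equiv \arith{x}{Q}_0 + \arith{x}{Q}_1 \equiv x \Mod{2^k},
\end{equation*}
which by definition is $x[k:0]$. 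Hence $\arith{x}{Q}[k:0]$ is a valid sharing of $x[k:0]$ on $\mathbb{Z}/2^k\mathbb{Z}$, and so $\drelu(\arith{x}{Q}[k:0])$ is determined entirely by the most significant bit of $x[k:0]$, namely $x[k-1]$.

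Next I would show that the range hypothesis forces $x[k-1]$ to coincide with the true sign bit $x[N-1]$. Under the $N$-bit two's-complement convention, $\drelu(\arith{x}{Q})=1$ exactly when $x[N-1]=0$. If $0 \le x < 2^{k-1}$, then bits $k-1,\dots,N-1$ of $x$ are all zero, so $x[k-1]=0=x[N-1]$ and the low $k$ bits reinterpreted as a signed $k$-bit integer still equal the nonnegative value $x$; both invocations of DReLU return $1$. If $-2^{k-1} \le x < 0$, then by sign extension all bits at positions $k-1,\dots,N-1$ equal $1$, so $x[k-1]=1=x[N-1]$, the reduced secret $x[k:0]$ equals $x$ as a signed $k$-bit integer, and both invocations return $0$. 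In either case $\drelu(\arith{x}{Q})=\drelu(\arith{x}{Q}[k:0])$.

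The only delicate point --- and the step I expect to require the most care --- is the two's-complement bookkeeping in the second stage: one must verify that truncating to $k$ bits introduces no wrap-around that would flip the sign bit, which is precisely what $-2^{k-1}\le x < 2^{k-1}$ guarantees by confining $x$ to the range representable as a $k$-bit signed integer. Once the agreement of the sign bits across the two rings is pinned down, the result is immediate since DReLU is a function of that single bit. Finally, the $2$-party argument extends verbatim to $p>2$ parties, because reduction modulo $2^k$ continues to commute with the $p$-fold sum of the shares.
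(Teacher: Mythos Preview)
Your proposal is correct and follows essentially the same approach as the paper: first show that the truncated shares $\arith{x}{Q}[k:0]$ form a valid sharing of $x[k:0]$ on $\mathbb{Z}/2^k\mathbb{Z}$ via reduction modulo $2^k$, then argue that the range hypothesis $-2^{k-1}\le x<2^{k-1}$ forces the sign bits $x[k-1]$ and $x[N-1]$ to agree. Your version is slightly more explicit in the case analysis for the sign-bit agreement and in noting the extension to $p>2$ parties, but the argument is the same.
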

\begin{proof}
$\arith{x}{Q}[k:0]$ can be seen as secret shares of $x[k:0]$ in $\mathbb{Z}/2^{k}\mathbb{Z}$. This is because $\arith{x}{Q}[k:0] \equiv \arith{x}{Q} \Mod{2^{k}}$ and $x[k:0] \equiv x\Mod{2^k}$, and thus, applying $\Mod{2^k}$ to both sides of
\begin{align*}
   \arith{x}{Q}_0 + \arith{x}{Q}_1 \equiv x \Mod{2^{N}}
\end{align*}
results in
\begin{align*}
   \arith{x}{Q}_0[k:0] + \arith{x}{Q}_1[k:0]
   \equiv x[k:0] \Mod{2^{k}}. 
\end{align*}
Applying DReLU to $\arith{x}{Q}[k:0]$ on a smaller ring $\mathbb{Z}/2^{k}\mathbb{Z}$ will simply output secret shares indicating whether its secret ($x[k:0]$) is positive.
Thus, $\drelu(\arith{x}{Q}) = \drelu(\arith{x}{Q}[k:0])$ if and only if their secrets ($x[k:0] \in \mathbb{Z}/2^{k}\mathbb{Z}$ and $x \in \mathbb{Z}/Q\mathbb{Z}$) have the same sign bits, \emph{i.e.}, $x[k-1] = x[N-1]$.
This is always the case if (but not only if) $-2^{k-1} \le x < 2^{k-1}$.
\end{proof}

In MPC frameworks, $N$ is usually chosen to be much larger than what is needed to represent the range of $x$ to avoid wrap-around during arithmetic computation~\cite{secureml, crypten}.
For example, CrypTen~\cite{crypten} uses $N=64$, while a floating point representation $x_f$ is converted into an integer ring element with $x = \lfloor2^{16}x_{f}\rceil$. As intermediate activations ($x_f$) in a DNN are usually close to zero, $x = \lfloor2^{16}x_{f}\rceil$ only occupies a small subset of the full range represented by $N=64$.
For the dataset we studied, $k$ between 18--22 was sufficient for $-2^{k-1} \le x < 2^{k-1}$ to always hold. The result indicates that 42--46 high-order bits (accounting for \textbf{66--72\%}) of the secret shares can be safely discarded without causing \textbf{any} mathematical error.
Unlike linear layers, DReLU does not cause any wrap-around errors and does not need to operate on a large ring.

\paragraph{Removing low-order bits}
Next, we show that discarding $m$ low-order bits in secret shares (\emph{i.e.}, using $\arith{x}{Q}[N:m])$ before DReLU is equivalent to applying magnitude-based activation pruning after ReLU.

\begin{theorem}
\label{thm:m}
Consider arithmetic secret shares of $x$: $\arith{x}{Q}_p \in \mathbb{Z}/Q\mathbb{Z}$ in an $N$-bit signed integer representation. If each party removes $m$ low-order bits of the secret shares and uses $\arith{x}{Q}_p[N:m] \in \mathbb{Z}/2^{N-m}\mathbb{Z}$ for DReLU evaluation, the ReLU output is equivalent to performing ReLU precisely and zeroing-out values below $2^{m}$.
\end{theorem}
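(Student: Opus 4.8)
The plan is to mirror the structure of the proof of Theorem~\ref{thm:k}, but to track a \emph{carry} instead of a clean modular reduction, since dropping the $m$ low-order bits corresponds to integer division by $2^m$ rather than a reduction modulo a power of two. First I would fix representatives $a=\arith{x}{Q}_0$ and $b=\arith{x}{Q}_1$ in $[0,2^N)$ satisfying $a+b=\hat{x}+w2^N$, where $\hat{x}\in[0,2^N)$ is the unsigned representative of $x$ and $w\in\{0,1\}$ absorbs the wrap-around. Removing the low bits replaces each share by $\arith{x}{Q}_p[N:m]=\lfloor\,\cdot\,/2^m\rfloor$, and these are the shares whose sign DReLU actually reads on the reduced ring $\mathbb{Z}/2^{N-m}\mathbb{Z}$.

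The key computation I would carry out is to reconstruct the secret that these truncated shares encode. Writing each share as $2^m\lfloor\,\cdot\,/2^m\rfloor+(\,\cdot\,\bmod 2^m)$ and summing, the two low parts add to at most $2^{m+1}-2$, so $\lfloor a/2^m\rfloor+\lfloor b/2^m\rfloor=\lfloor(a+b)/2^m\rfloor-c$ for a carry $c\in\{0,1\}$ determined entirely by whether the low-$m$-bit halves overflow. Reducing modulo $2^{N-m}$ kills the $w2^{N-m}$ term coming from the wrap-around and folds the unsigned-to-signed conversion of $\hat{x}$ into $x$, leaving the reconstructed secret equal, in the signed $(N-m)$-bit interpretation, to $\lfloor x/2^m\rfloor-c$. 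Here I would invoke the range guaranteed by Theorem~\ref{thm:k} to rule out a further wrap of $\lfloor x/2^m\rfloor-c$ on the smaller ring.

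With this reconstruction in hand, the sign analysis splits into three regimes and gives the result. When $x\ge 2^m$ we have $\lfloor x/2^m\rfloor\ge 1\ge c$, so the reduced secret is non-negative and DReLU returns $1$; since the final multiplication in Equation~\ref{eq:relu_ours} uses the full-precision share, the output is exactly $x$, matching true ReLU. When $x<0$ we have $\lfloor x/2^m\rfloor\le -1$, so the reduced secret $\lfloor x/2^m\rfloor-c\le -1<0$, DReLU returns $0$, and the output is $0$, again matching ReLU. In the remaining band $0\le x<2^m$ the quotient $\lfloor x/2^m\rfloor$ is $0$, so DReLU returns $1$ exactly when $c=0$; in either case the emitted value lies in $[0,2^m)$, i.e.\ below the pruning threshold. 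I would then conclude that the approximate scheme agrees with exact ReLU everywhere outside $[0,2^m)$ and produces only sub-threshold magnitudes inside it, which is precisely the behaviour of exact ReLU followed by zeroing every output below $2^m$.

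The step I expect to be the main obstacle --- and the one needing the most care --- is the carry $c$ in the sub-threshold band: because $c$ depends on the random low bits of the shares rather than on $x$ alone, the effective pruning threshold is not exactly $2^m$ but some value in $[0,2^m)$. I would address this by emphasising that the entire discrepancy from ``prune below $2^m$'' is confined to the band $[0,2^m)$ and bounded in magnitude by $2^m$, so the net effect is still magnitude-based pruning at scale $2^m$; making this identification clean, rather than merely ``up to a sub-threshold carry,'' is the delicate part of the argument. The signed reinterpretation on the reduced ring and checking that the extreme-negative corner does not wrap are the other places where I would be careful but expect no real difficulty.
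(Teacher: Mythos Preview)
Your proposal follows essentially the same approach as the paper: identify low-bit truncation with floor division by $2^m$, observe that the sum of truncated shares reconstructs either $\lfloor x/2^m\rfloor$ or $\lfloor x/2^m\rfloor-1$ on the reduced ring (your carry $c\in\{0,1\}$ is exactly the paper's two cases), dismiss the underflow corner by the range assumption, and case-split on the sign. Your treatment of the point you flagged as delicate is actually more careful than the paper's --- the paper simply says the $-1$ case ``can cause'' a sign flip for $0<x<2^m$ and declares this equivalent to pruning at threshold $2^m$, without confronting the fact that which case occurs depends on the random low bits of the shares rather than on $x$; your explicit acknowledgment that the effective threshold is share-dependent within $[0,2^m)$ is the honest reading of what the argument actually establishes.
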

\begin{proof}
Note that $\arith{x}{Q}[N:m] = \lfloor\frac{\arith{x}{Q}}{2^{m}}\rfloor$. Consequently,
\begin{align*}
&\arith{x}{Q}_0[N:m] + \arith{x}{Q}_1[N:m]\\
&\equiv \lfloor\frac{\arith{x}{Q}_0}{2^{m}}\rfloor + \lfloor\frac{\arith{x}{Q}_1}{2^{m}}\rfloor\\
&\equiv
\begin{cases}
    \lfloor\frac{x}{2^{m}}\rfloor  \Mod{2^{N-m}}, & \text{or}\\
    \lfloor\frac{x}{2^{m}}\rfloor-1  \Mod{2^{N-m}}.
\end{cases}
\end{align*}

In other words, $\arith{x}{Q}[N:m] \in \mathbb{Z}/2^{N-m}\mathbb{Z}$ are secret shares of either $\lfloor\frac{x}{2^{m}}\rfloor$ or $\lfloor\frac{x}{2^{m}}\rfloor-1$ in $\mathbb{Z}/2^{N-m}\mathbb{Z}$. The sign of the former is always the same as $x$ (here, for simplicity we consider zero as positive, which does not make any difference for ReLU), so applying DReLU yields the same sign as $x$. The latter can cause the sign to flip if (1) $0 < x < 2^{m}$ ($\lfloor\frac{x}{2^{m}}\rfloor$ smaller than 1), or (2) $\lfloor\frac{x}{2^{m}}\rfloor-1 < -2^{N-m-1}$ (underflow).

In CrypTen, $x$'s range is usually much smaller compared to $N$ for the second case to happen. The first case can actually cause an incorrect result, as DReLU will incorrectly consider secrets in $0 < x < 2^{m}$ as negative and output secret shares of zero, which will cause the corresponding ReLU result to become zero. The behavior is equivalent to magnitude-based activation pruning with a threshold $2^{m}$.
\end{proof}
As many prior works~\cite{fatrelu, activation_sparsity, activation_sparsity2, attention_sparsity, attention_sparsity2} empirically showed, magnitude-based activation pruning degrades accuracy gracefully when used in moderation. Thus, a careful choice of $m$ is expected to not harm the model accuracy significantly, having similar effects with prior works on magnitude-based pruning.

\paragraph{Comparison with traditional compression}
What we do is similar in spirit to compression or quantization methods~\cite{deepcompression} in that we aim to reduce the number of bits used. However, traditional compression/quantization aims to make the value of the compressed result close to the original value, \emph{i.e.}, $\compress(\arith{x}{Q}_p) \approx \arith{x}{Q}_p$; however, as $\arith{x}{Q}_p$ are random values fully occupying the $N$-bit representation space (64-bit in CrypTen~\cite{crypten}), it cannot be compressed much without significantly distorting the result. In contrast, our proposed method does not preserve the values of the secret shares at all ($\arith{x}{Q}_p[k:m] \neq \arith{x}{Q}_p$), but it instead ensures that the DReLU result would be similar before and after the bits are discarded ($\drelu(\arith{x}{Q}[k:m]) \approx \drelu(\arith{x}{Q})$).

\paragraph{Applicability to other protocols}
The proof of Theorem~\ref{thm:k} and \ref{thm:m} is not confined to GMW or CrypTen as the proofs do not assume any particular implementation of DReLU.
Thus, the proof is directly applicable to any protocols that calculate DReLU to evaluate ReLU (\emph{i.e.}, use Equation~\ref{eq:relu_base}) and experience DReLU overhead increasing with the ring size. Prior works such as~\cite{aby3, cryptgpu, blaze, securenn, falcon, cryptflow} fall into this category.
\section{\sys System Design}

\begin{figure}
    \centering
    \includegraphics[width=0.49\textwidth]{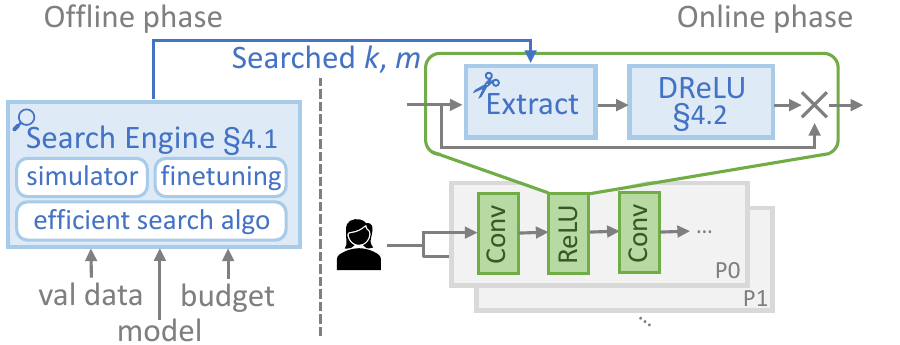}
    \caption{Overview of \sys.}
    \label{fig:sys}
\end{figure}

%
\sys is an MPC framework that allows the user to trade off between performance and accuracy by using the approximate ReLU in Equation~\ref{eq:relu_ours}.
\sys consists of an offline and an online phase. In the \emph{offline phase}, a search engine finds bits to throw out (\emph{i.e.}, select $k$, $m$ in $\arith{x}{Q}[k:m]$) for each ReLU layer to minimize accuracy degradation while maximizing performance. In the \emph{online phase}, \sys uses an efficient runtime library that uses the searched parameters and runs DReLU on a smaller ring to achieve an end-to-end speedup. Figure~\ref{fig:sys} summarizes \sys.

\begin{figure*}
    \centering
    \includegraphics[width=0.98\textwidth]{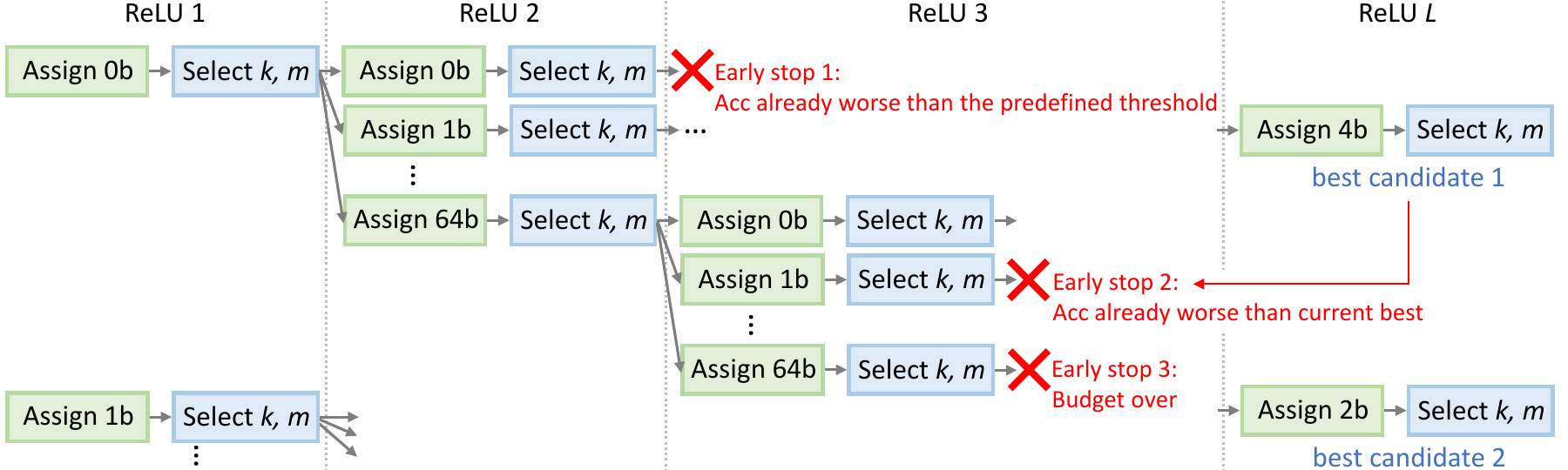}
    \caption{Summary of the search algorithm of \sys. The search enumerates all possible bit assignments in a DFS manner. For each bit assignment, the locally-optimal $k$ and $m$ values are selected. Searching a particular path is immediately stopped when the optimistic accuracy of that path is already worse than the predefined threshold (Early stop 1) or the previous best accuracy found so far (Early stop 2), or when the search budget is exceeded (Early stop 3).}
    \label{fig:dp_search}
\end{figure*}

\subsection{Offline Phase: Finding Bits to Discard}

Each ReLU layer can tolerate different amounts of bits being thrown out. We found that naively throwing out the same bits for every layer leads to suboptimal performance and accuracy (Section~\ref{sec:eval_ablation}). 
To find a tailored number of high- and low-order bits to throw out for each ReLU layer, \sys uses an offline search engine. The search engine consists of three key components: (1) an efficient MPC simulator, (2) a search algorithm, and (3) a model finetuning process at the end to regain some of the lost accuracy.

\subsubsection{MPC Simulator}
\label{sec:sys_sim}
Evaluating any configuration on a real MPC setup during the search process is time-consuming. To save the search time, \sys performs the search on an efficient simulator instead. The simulator simply performs a single-node ML inference (\emph{e.g.}, using popular frameworks like PyTorch) for all layers except ReLU. Only for ReLU layers, the simulator simulates what \sys would do during a real MPC-based inference, \emph{i.e.}, converts the floating point values into an integer ring element, generates secret shares, discards bits, and calculates DReLU using the GMW protocol.

Although the simulator does not simulate what \sys does exactly from the beginning to the end, we observed that the final accuracy trend we observe from the simulator is very similar to what we observe on a real MPC inference.
At the same time, evaluating a configuration on a simulator is much more efficient than running a real MPC, because (1) all the other layers except for ReLU run vanilla single-node inference and incur no additional MPC-related overhead, and (2) even for the ReLU layers, there is no communication overhead as all the parties are simulated on the same node.
The efficient simulator allows the search engine to evaluate numerous configurations within a reasonable time.
%

\subsubsection{Efficient Search Algorithm}
\label{sec:sys_search}
The goal of the search algorithm is to find the subset of bits in the secret shares ($\arith{x}{Q}[k:m]$) to use for the DReLU evaluation for each ReLU layer.
\sys's search engine provides two different search strategies: \sys-eco, which discards as many bits as possible without introducing \emph{any} errors, and \sys-$b$, which discards bits according to the given budget $b$ while minimizing the accuracy degradation.

\paragraph{\sys-eco}
The first approach discards as many bits as possible without introducing any errors. From Theorem~\ref{thm:k} and Theorem~\ref{thm:m}, we have seen that discarding high-order bits to some extent does not alter the ReLU result at all, while discarding low-order bits can always make some non-zero values near zero into zero. Thus, our first approach, \sys-eco, never throws away low-order bits and only throws away high-order bits, such that some performance improvement can be achieved without introducing any errors. In other words, \sys-eco uses $\arith{x}{Q}[k:0]$ with a proper $k$.
Proper $k$ can be selected for each layer by running a validation set while changing $k$ to see if the result changes or not. The search can be done for each layer independently in $O(N)$.

\paragraph{\sys-$b$}
\sys-$b$ takes in the relative amount of bits to retain as the budget $b$ and finds a configuration that meets the budget while maximizing accuracy.
For example, when the search budget is given as 1/16, it means the total number of bits used in each DReLU computation combined must be 1/16 or less than the original number of bits combined.
One way of achieving this is to use only 4 bits among the 64-bit secret shares for all ReLU layers. Alternatively, the budget can be also met by choosing to retain different numbers of bits for different layers (\emph{e.g.}, retain only 2 bits for some layers and 8 bits for other layers). It should be noted that different ReLU layers have different output dimensions --- usually for CNNs, earlier ReLU layers have larger dimensions, and discarding bits from the earlier layers reduces the budget more quickly.

Unlike \sys-eco, the search space for \sys-$b$ grows exponentially with the number of ReLU layers and quickly becomes intractable. With $l$ ReLU layers and $N$ possible bits that can be assigned to each layer, the combinations of possible bit assignments are already $O(N^l)$.
To make matters worse, each ReLU layer has to choose $k$ and $m$ that satisfies the number of assigned bits. For example, if one decides to retain 4 bits for all ReLU layers, each ReLU layer has to choose $k$ and $m$ from $(k, m) \in \{(4, 0), (5, 1),\ ...,\ (64, 60)\}$, resulting in a total $O(N^l)$ possible choices. This leads to a combined $O(N^{2l})$ search complexity.

\sys-$b$ enumerates all possible bit assignments starting from the first ReLU layer in a depth-first-search (DFS) manner (Figure~\ref{fig:dp_search}). To navigate through the exponential search space within a reasonable amount of time, \sys uses several optimizations: using locally-optimal $k$ and $m$ values, early stopping for unlikely paths, and allowing a coarser search.

First, to avoid the $O(N^l)$ complexity of finding a global optimum $k$ and $m$ values, \sys uses a local optimum for each layer instead.
When a certain number of bits is assigned for a layer, the search engine immediately fixes the $k$ and $m$ values for all the other layers and finds the $k$ and $m$ values for the particular layer that gives the best validation accuracy.
This is done by (1) fixing $k$ and $m$ with the already-found values for previous layers that already have been searched, (2) using $k=N$, $m=0$ (\emph{i.e.}, no bit discarded) for successive layers that haven't been searched yet, and (3) linearly enumerating all the possible $k$ and $m$ values that meet the assigned number of bits for the current layer.
The process essentially finds a locally-optimal $k$ and $m$ for each layer while optimistically assuming that successive layers will not degrade the accuracy further.
%
We empirically saw that the heuristic works well.

\begin{figure*}
    \centering
    \includegraphics[width=\textwidth]{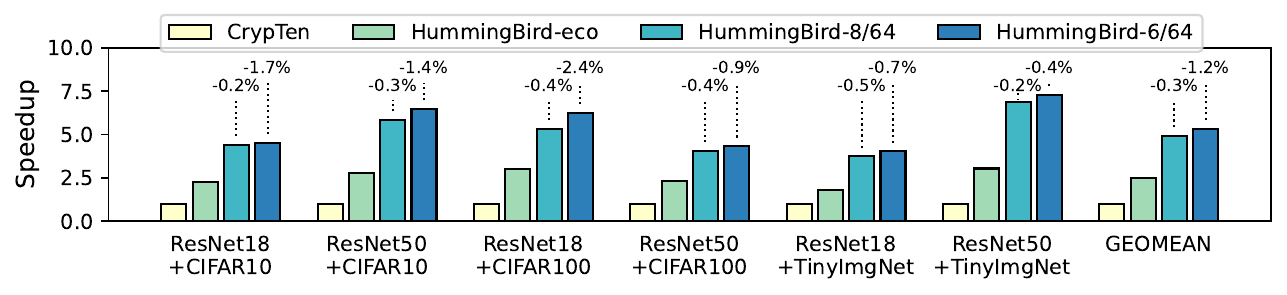}
    \caption{On A100 GPUs, \sys improves the end-to-end performance by 1.81--3.04$\times$ (\sys-eco), 3.74--6.89$\times$ (\sys-8/64), and 4.03--7.28$\times$ (\sys-6/64) over CrypTen. Any accuracy degradation is shown above the bar.}
    \label{fig:perf}
\end{figure*}

Even when we use the locally-optimal $k$ and $m$, navigating all the possible bit assignments with DFS still incurs $O(N^l)$ complexity.
To further make the search tractable, we prune the search space early if a particular branch in the DFS is likely to yield suboptimal configurations.
After assigning a certain number of bits to a layer, the search engine evaluates an optimistic accuracy to find a locally-optimal $k$ and $m$ (discussed in the previous paragraph).
We immediately stop exploring branches whose optimistic accuracy is already worse than a predefined threshold (Figure~\ref{fig:dp_search}, Early stop 1) or the best candidate found so far (Early stop 2).
The insight is that if the optimistic accuracy is already bad, the actual accuracy of any configurations from this branch cannot be good.
We also track the total number of bits assigned to each layer and immediately stop when it exceeds the budget (Early stop 3).

For additional efficiency, we allow performing a search at a larger granularity by grouping multiple ReLUs and making them share the same parameters.
For models with a repeating block structure (\emph{e.g.}, ResNet~\cite{resnet}), a natural choice is to group the ReLUs within the same block.
All these optimizations (using locally-optimal $k$ and $m$, early stopping, and ReLU grouping) combined together allow our search engine to find a good configuration usually within several minutes, making the search highly practical (Section~\ref{sec:eval_search_time}).

When zero bit is assigned to a layer, that ReLU layer becomes an identity layer (\emph{i.e.}, input = output). \sys can be seen as a generalization of ReLU culling~\cite{deepreduce} which replaces a ReLU layer with an identity layer for performance.

\subsubsection{Model Finetuning}
After we find a good configuration, we go through a model finetuning process to regain some of the accuracy drops. The finetuning process is simply done by re-training the model for a small number of epochs with the same training data, while using the approximate ReLU layers with the found parameters. The finetuning process helps the rest of the model to adapt to the approximate ReLU layers. We found that finetuning was not necessary for budgets near 1/8 and above as the approximation does not degrade the accuracy much; however, finetuning was essential for aggressive budgets below 1/8, where non-negligible accuracy drops occurred (Section~\ref{sec:eval_ablation}).


\subsection{Online Phase: Efficient DReLU on a Smaller Ring}
\label{sec:sys_lib}

Using the parameters ($k$ and $m$) found for each ReLU layer, \sys uses the approximate ReLU in Equation~\ref{eq:relu_ours} during online MPC inference.
Note that $k$ and $m$ for each layer are selected during the offline phase using the validation data and are fixed during the online phase, not leaking any additional information about the online user data.

With the reduced number of bits, \sys speeds up the DReLU process, especially the circuit adder (Section~\ref{sec:bg_detail}), with mainly two optimizations.
First, it runs a circuit of depth $O(\lceil log(k - m) \rceil)$ instead of $O(logN)$. Second, it efficiently packs and unpacks the subset of bits into a 64-bit tensor before and after each communication to reduce the overhead. While the circuit depth change only impacts the circuit adder overhead (\textbf{Circuit}; Section~\ref{sec:bg_detail}), the reduced communication due to bitpacking also improves \textbf{Mult} and \textbf{B2A} from Section~\ref{sec:bg_detail}.

We implement \sys's online phase as an extension to the popular CrypTen~\cite{crypten} codebase with Python.
The added code accounts for less than 2\% of the total execution time.

\section{Evaluation Results}
\label{sec:eval}

\begin{table}[]
    \centering
    \caption{Baseline model accuracy.}
    \label{tab:acc}
    \begin{tabular}{ccc}
        \toprule
         Dataset & Model & Accuracy
         \\\midrule
         \multirow{2}{*}{CIFAR10} & ResNet18 & 92.78\%\\
         & ResNet50 & 93.15\%\\\midrule
         \multirow{2}{*}{CIFAR100} & ResNet18 & 77.98\%\\
         & ResNet50 & 79.36\%\\\midrule
         \multirow{2}{*}{\makecell{Tiny-\\ImageNet}} & ResNet18 & 65.46\%\\
         & ResNet50 & 66.87\%\\\bottomrule
    \end{tabular}
\end{table}

In this section, we answer the following questions: 
\begin{itemize}
    \item How faster is \sys in different settings?
    \item How much communication is reduced?
    \item What are the major overheads of \sys?
    \item How long is the search time?
    \item How important are each component of \sys? 
\end{itemize}

\subsection{Evaluation Setup}

\begin{figure*}
    \centering
    \includegraphics[width=\textwidth]{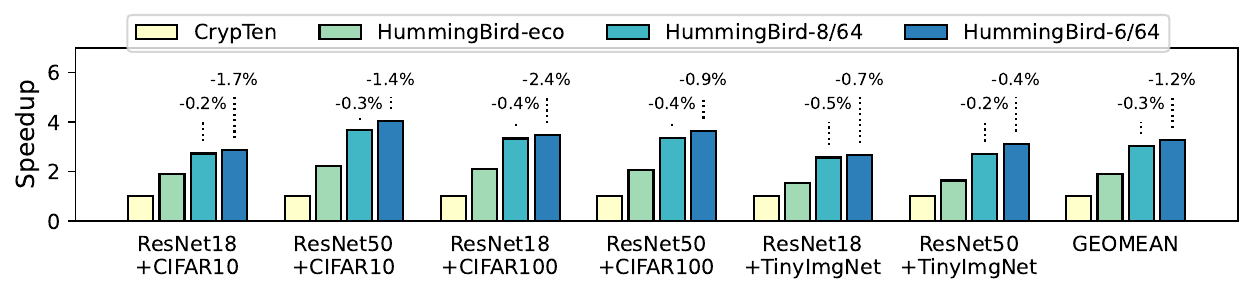}
    \caption{On V100 GPUs, \sys improves the end-to-end performance by 1.55--2.22$\times$ (\sys-eco), 2.57--3.67$\times$ (\sys-8/64), and 2.66--4.03$\times$ (\sys-6/64) over CrypTen. Any accuracy degradation is shown above the bar.}
    \label{fig:perf_aws}
\end{figure*}

\paragraph{System setup}
We evaluate \sys in several representative setups.
The first setup runs two parties on two nodes connected with a 10 Gbps LAN, each with one A100 GPU.
The second setup runs an otherwise identical setup, with a less powerful V100 GPU.
Finally, the third setup runs two parties on two A100 GPUs on a single node. The third represents an ideal setup where the network bandwidth is much higher.
We do not model the overhead of generating Beaver triplets, assuming they are generated and stored offline~\cite{reagen_asplos} or sent by a trusted third-party (TTP) asynchronously.
Unlike in a client-server MPC setup where the clients have limited storage~\cite{reagen_asplos}, we assume the servers have enough storage to hold pre-generated triplets.

\paragraph{Models and datasets}
Following prior works~\cite{reagen_asplos, cryptonite, snl}, we evaluated \sys with ResNet18 and ResNet50~\cite{resnet}, models that are easily supported with MPC with minimal modifications. Popular models like MobileNet~\cite{mobilenet} have components not suitable for MPC (\emph{e.g.}, ReLU6) and are not commonly used. We evaluated three different datasets, CIFAR10~\cite{cifar10}, CIFAR100~\cite{cifar10}, and TinyImageNet~\cite{tinyimagenet}.
For CIFAR10, we replaced the max pooling with average pooling, following~\cite{reagen_asplos, cryptonite}. For the rest, we simply removed max pooling (as average pooling did not work well), following~\cite{snl}.
The baseline accuracy for each model/dataset is summarized in Table~\ref{tab:acc}; the numbers align with prior works~\cite{snl}.
%

\paragraph{\sys parameters}
For the search engine, we used a validation set of 1024 samples and grouped ReLUs into five ReLU groups for faster search, following the five layer groups of ResNet~\cite{resnet}. We used the search budget of 8/64 and 6/64.


\subsection{\sys Performance Analysis}

\paragraph{End-to-end performance improvement}
Figure~\ref{fig:perf} and \ref{fig:perf_aws} show the speedup of \sys over the baseline CrypTen in A100 and V100 GPUs, respectively. The baseline CrypTen uses $N=64$ bits per secret share. \sys-eco discards high-order bits as much as possible without adding errors. \sys-$\{8/64,6/64\}$ uses the parameters found by the search engine when given a budget of 8/64 or 6/64.

Figure~\ref{fig:perf} and \ref{fig:perf_aws} show that \sys improves the end-to-end performance significantly. Without adding any errors (\sys-eco), \sys improves the average performance by 2.49$\times$ and 1.90$\times$ on A100 and V100, respectively.
When some accuracy degradation is tolerated, the average performance improvement becomes 4.93$\times$ and 3.04$\times$ (-0.3\%; \sys-8/64), and 5.34$\times$ and 3.26$\times$ (-1.2\%; \sys-6/64), for A100 and V100, respectively.

The performance improvement is less on the less powerful V100 GPUs because the linear layer computation (\emph{e.g.}, convolution), which \sys does not accelerate, is slower on V100.
The performance improvement discrepancy becomes larger with a tighter, as communication is no longer the sole bottleneck, and computation time becomes more important.

\paragraph{Performance improvement on different networks}
Figure~\ref{fig:analytical} shows the average speedup across all the models/benchmarks for different network setups.
High-BW represents an ideal setup with very high bandwidth. It is measured on two GPUs on a single node, connected with up to 16 Tbps link~\cite{nvlink}.
LAN reports a setup where two nodes each with a GPU are connected with a 10 Gbps LAN. WAN reports an analytical projection assuming a 352 Mbps bandwidth, a WAN bandwidth number used in prior work~\cite{cheetah}. To analyze the end-to-end performance in the WAN setup, we separately measured the communication time from the High-BW setup and scaled it according to the assumed bandwidth. 

\begin{figure}
    \centering
    \includegraphics[width=0.49\textwidth]{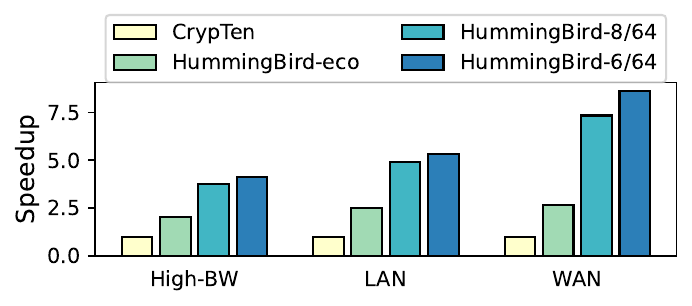}
    \caption{Speedup of \sys on different network setups. The bar shows the geometric mean across all the benchmarks. On WAN, \sys's speedup reaches 2.67--8.64$\times$.}
    \label{fig:analytical}
\end{figure}

\begin{figure}
    \centering
    \includegraphics[width=0.49\textwidth]{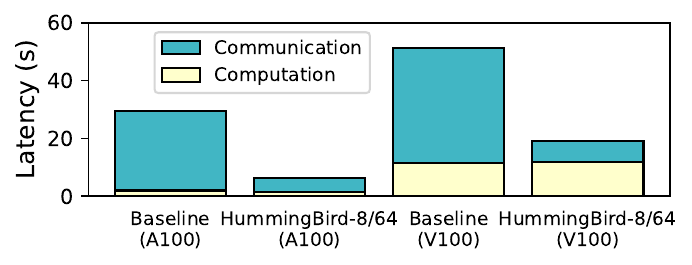}
    \caption{Overhead breakdown of the baseline CrypTen and Hummingbird-8/64 on A100 and V100 GPUs. \sys reduces the communication overhead to a degree where the computation overhead is no longer negligible.}
    \label{fig:breakdown}
\end{figure}

\begin{figure*}
    \centering
    \includegraphics[width=\textwidth]{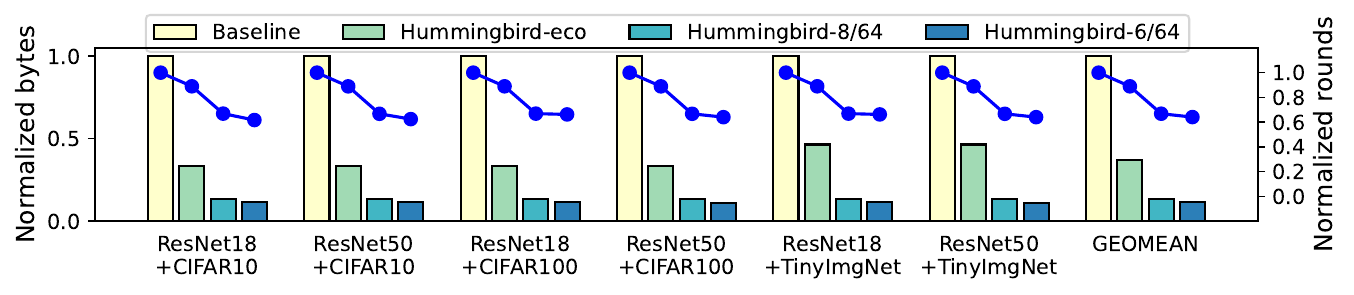}
    \caption{Normalized bytes that need to be communicated (bar) and the number of communication rounds (line). \sys reduces the number of communication rounds by 1.12--1.56$\times$ and total communicated bytes by 2.68--8.76$\times$.}
    \label{fig:comm}
\end{figure*}

Figure~\ref{fig:analytical} shows that, as expected, \sys's performance benefit becomes more notable as the network becomes more limited. Compared to the 2.49--5.34$\times$ speedup of LAN, High-BW setup enjoyed less speedup of 2.03--4.12$\times$, while WAN setup enjoyed more speedup of 2.67--8.64$\times$. High-BW and the LAN setup did not show significant difference although their bandwidth differed by multiple orders, because \sys was not able to fully utilize the bandwidth of High-BW anyways ---  even when the High-BW setup could support up to 16 Tbps, the usage did not exceed 20 Gbps.

\paragraph{Communication}
Figure~\ref{fig:comm} shows the total bytes communicated (bar plot) and the number of communication rounds (line plot). On average, \sys reduces the number of communication rounds by 1.12--1.56$\times$, and reduces the total bytes communicated by 2.68--8.76$\times$. Communication does not decrease proportionally with the budget and starts to saturate because there are communications that cannot be reduced by \sys (\emph{e.g.}, \textbf{Mult} from Figure~\ref{fig:characterize}).

\paragraph{Overhead breakdown} Figure~\ref{fig:breakdown} shows the overhead breakdown of CrypTen and \sys-8/64, both on A100 and V100 GPUs. The breakdown clearly shows that \sys reduces the communication overhead down to a point where the computation overhead becomes non-negligible again.
With \sys-8/64, the portion of the communication overhead decreased from 93\% to 78\% (A100) and 78\% to 39\% (V100), respectively. For high-performance GPUs like A100, the major bottleneck is still communication (78\%); however, for less-powerful GPUs like V100, \sys shifts the major bottleneck to computation.

The result also clearly shows why \sys's speedup is larger for A100 compared to V100. In V100, the computation overhead, which \sys does not accelerate, becomes the major bottleneck.
With \sys, communication is not the sole bottleneck anymore, and future works would have to optimize both the computation and the communication to gain meaningful performance improvements.


\subsection{\sys Search Overhead}
\label{sec:eval_search_time}

Table~\ref{tab:search_time} summarizes the search time of \sys for different setups. In most cases, \sys was able to find a satisfactory configuration in a few minutes. When the dataset and the model were large (\emph{e.g.}, TinyImageNet with ResNet50), the search time became longer, sometimes reaching an hour.
The search time can be further reduced by using a smaller validation set or using a coarser ReLU group.

\begin{table}[]
\centering
    \caption{\sys's configuration search time.}
    \label{tab:search_time}
    \begin{tabular}{cccc}
        \toprule
         Dataset & Model & \multicolumn{2}{c}{Search budget}\\
         & & 8/64 & 6/64
         \\\midrule
         \multirow{2}{*}{CIFAR10} & ResNet18 & 5m 34s & 4m 28s\\
         & ResNet50 & 6m 10s & 5m 47s \\\midrule
         \multirow{2}{*}{CIFAR100} & ResNet18 & 5m 37s & 4m 19s\\
         & ResNet50 & 18m 32s & 18m 34s\\\midrule
         \multirow{2}{*}{\makecell{Tiny-\\ImageNet}} & ResNet18 & 13m 1s & 11m 34s \\
         & ResNet50 & 42m 3s & 1h 8m \\\bottomrule
    \end{tabular}
\end{table}

\subsection{Ablation Studies}
\label{sec:eval_ablation}

\paragraph{Effectiveness of the search engine}
\sys's search engine finds bits to discard (\emph{i.e.}, $k$, $m$) per each ReLU group. A simple alternative approach would be to use the same $k$ and $m$ for all the ReLU layers. We found that such a naive alternative does not work well, incurring more than an 8\% accuracy drop for the same search budget compared to \sys. Figure~\ref{fig:bit_selected} visualizes the bits that are discarded (gray hatched) or retained (green) among the 64 bits  for the two approaches with a budget of 8/64. Unlike the naive approach that discards the same bits for all the ReLU layers (Figure~\ref{fig:bit_selected}, left), \sys flexibly chooses to discard different amounts of bits in different positions (Figure~\ref{fig:bit_selected}, right), sometimes discarding more bits (G3) and sometimes discarding less (G4). 
As different ReLU layers have different importance and characteristics, the search engine is crucial for achieving high accuracy.

\begin{figure}
    \centering
    \includegraphics[width=0.49\textwidth]{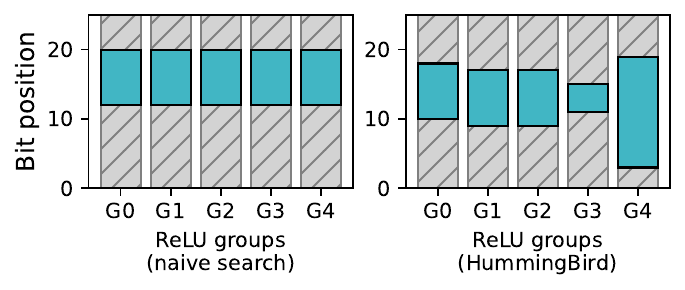}
    \caption{Retained (green) and discarded (grey hatched) bits for each ReLU group on different search strategies. The plot shows that \sys's search engine chooses different s and positions of bits for different ReLU groups.}
    \label{fig:bit_selected}
\end{figure}

\paragraph{Effectiveness of finetuning}
While finetuning was not necessary in cases where the search budget was reasonably large (\emph{e.g.}, \sys-8/64) and the accuracy degradation was already small, we found finetuning to be crucial when the search budget was small (\emph{e.g.}, \sys-6/64) and non-negligible accuracy degradation occurred after discarding bits. Table~\ref{tab:finetune} shows the accuracy before and after finetuning for \sys-6/64.
Finetuning improves the model accuracy by 0.95--7.05\% depending on the dataset and the model.

\begin{table}[]
    \centering
    \caption{Impact of finetuning (FT) on \sys-6/64.}
    \label{tab:finetune}
    \begin{tabular}{cccc}
        \toprule
         Dataset & Model & Before FT& After TF\\\midrule
         \multirow{2}{*}{CIFAR10} & ResNet18 & 90.09\% & 91.04\%\\
         & ResNet50 & 87.6\% & 91.12\%\\\midrule
         \multirow{2}{*}{CIFAR100} & ResNet18 & 73.04\% & 75.57\%\\
         & ResNet50 & 72.45\% & 78.49\%\\\midrule
         \multirow{2}{*}{\makecell{Tiny-\\ImageNet}} & ResNet18 & 60.21\% & 64.79\%\\
         & ResNet50 & 59.82\% & 66.47\%\\\bottomrule
    \end{tabular}
\end{table}

\section{Additional Related Works}

\subsection{Alternative Approaches to Private Inference}

There are multiple alternative approaches to realize private inference. Here, we briefly discuss those alternatives.

\paragraph{Trusted execution environment (TEE)}
TEEs~\cite{sgx, trustzone, amd-sev, keystone, aegis, xom, sp, mi6, tee1, secureme, sanctum, isox, tee2, tee3, tee4, tee5, cheri, tee6} provide hardware-level protection that (1) allows a remote party to authenticate the software that is running on the hardware and (2) ensures the confidentiality and integrity of code and data inside the TEE. Users can send their private data to a remote server's TEE and run inference or training, while ensuring their data stay confidential.
Following the initial proposal from academia~\cite{aegis, xom, sp}, most major vendors have TEEs in their commercial products, including Intel SGX~\cite{sgx}, ARM TrustZone~\cite{trustzone}, AMD SEV~\cite{amd-sev}, RISC-V Keystone~\cite{keystone}, and NVIDIA's recently announced confidential computing feature~\cite{hopper}. Moreover, TEEs for emerging heterogeneous accelerators are also being actively proposed~\cite{hix, cronus, shef, iceclave, tnpu, graviton, tee7, tee8, sealing, guardnn}.
TEEs are efficient because they eliminate the need for any expensive HE or MPC operations, and are widely available in commodity off-the-shelf hardware.
However, the security assurance from a TEE is generally considered to be weaker than cryptographic protection from HE/MPC.
Although data inside a TEE should ideally be secure, 
TEE implementations may be vulnerable due to hardware/software bugs~\cite{tee_bugs, coin_attack} or side channels~\cite{tee_sidechannel}.

\paragraph{Fully homomorphic encryption (FHE)}
FHE is a cryptographic technique that allows certain computations directly on an encrypted ciphertext.
Using FHE, servers can collect user data in a ciphertext form and run computation (\emph{e.g.}, DNN inference) directly on the ciphertext~\cite{fhe}. While the first HE schemes and systems were very slow, subsequent works accelerated HE-based private inference heavily on CPUs~\cite{latigo}, GPUs~\cite{100x, hyphen}, FPGAs~\cite{fpga_he, heax, fab}, and custom accelerators~\cite{f1, bts, ark}. Similar to MPC, non-linear layers such as ReLU incur high overhead and are often approximated with high-degree polynomial functions~\cite{bts}. 
While recent advances in algorithms and hardware accelerators significantly reduced the latency of FHE, the throughput is still limited: using CIFAR10 and ResNet20, recent studies report a throughput of 8 samples/s on a custom accelerator~\cite{ark} and 0.7 samples/s on an A100 GPU~\cite{hyphen}, which are orders of magnitude less than what \sys achieves.

\paragraph{Instance encoding}
Instance encoding~\cite{instahide_broken} refers to a general concept where the client encodes the input into an encoding in a statistical way, such that reconstructing the original input is hard while some useful downstream inference or training is still possible with the encoding.
Similar concepts have been explored under many different names across different communities, including split inference~\cite{neurosurgeon, nopeek-infer, noise1, noise2, shredder, cloak}, split learning~\cite{nopeek, split_learning, split_learning2}, vertical federated learning (vFL)~\cite{sfl, fl_survey, ressfl}, learnable encryption~\cite{neuracrypt, dauntless, imaginary_rotate}, private image publication~\cite{liyue18, liyue19}, \emph{etc.}
Instance encoding is usually efficient computation-wise, as no cryptographically-heavy operation is needed.
However, these approaches lack a strong theoretical guarantee on their claimed privacy-enhancing properties~\cite{instahide_broken, neuracrypt_broken}, unlike MPC or FHE which are shown to be cryptographically secure.
%
A few recent studies provided a theoretical analysis of privacy through instance encoding, using tools like (metric) differential privacy~\cite{liyue18, liyue19}, Fisher information leakage~\cite{maeng_fil, maeng_fil2}, or PAC theory~\cite{pac_privacy}. Still, the theoretical guarantees are much weaker compared to MPC. For example, although instance encoding can make input reconstruction more difficult, it still leaks a non-trivial amount of information about private inputs.

\subsection{Additional Related Works on MPC}

Section~\ref{sec:bg_mpc} summarizes popular client-server and multi-server MPC protocols.
Many of these works simultaneously introduce orthogonal approaches to accelerate ReLU, which are often complementary to ours.
Some of the popular approaches include replacing ReLU with an identity function~\cite{deepreduce, snl}, replacing ReLU with a polynomial~\cite{aespa, delphi, safenet, cryptonet, sisyphus}, and using a neural architecture search to find a model with less number of ReLUs~\cite{deepreduce, deepreshape, sphynx}. As most of these works were not able to fully replace all the ReLUs, \sys will still be beneficial to these systems as well.
Other works focused on applying MPC to more complex models other than CNNs, including Transformers~\cite{mpcformer, mpc_transformer} and recommendation models~\cite{max_pir}.
\sys can still be applied to these works when they use ReLU~\cite{mpcformer, max_pir}.
\section{Conclusion}

MPC-based private inference can allow users to run large models hosted on a remote server without worrying about their private data being leaked. However, running inference using MPC is very slow, due to the significant communication overhead it incurs. A majority ($>$ 93\%) of the overhead comes from the ReLU layers.

In this work, we theoretically show that most of the bits in the secret shares can be removed during ReLU evaluation with little to no impact on accuracy for many popular protocols. Leveraging the finding, we propose \sys, an efficient MPC framework that uses a reduced number of bits during the ReLU evaluation. \sys carefully selects the bits to retain for each layer and uses an efficient runtime library, reducing the communication overhead by up to 8.76$\times$ and achieving up to 8.64$\times$ end-to-end speedup over CrypTen.

\bibliography{references}

\end{document}